\newcommand{\removelatexerror}{\let\@latex@error\@gobble}
\newcounter{appcount}
\newcommand{\appendicesname}
            {Appendix\ \thechapter  \Alph{appcount}}
\newcommand{\bookappendicesname}
            {Appendix\ \Alph{appcount}}
\newcommand{\chapterappendix}[1]
          {\par\setcounter{section}{0}
           \setcounter{equation}{0}
           \setcounter{table}{0}
           \setcounter{figure}{0}
          \addtocounter{appcount}{1}   \renewcommand{\theequation}{\thechapter\Alph{appcount}.\arabic{equation}}
          \renewcommand{\thetable}{\thechapter\Alph{appcount}.\arabic{table}}
          \renewcommand{\thefigure}{\thechapter\Alph{appcount}.\arabic{figure}}
           \setcounter{section}{\arabic{chapter}\Alph{section}}
           \if@openright\cleardoublepage\else\clearpage\fi
           \chapter*{\huge{\appendicesname}\newline\newline \Huge{#1}}
           \addcontentsline{toc}{section}{\thechapter\Alph{appcount} #1}
           \markright{\MakeUppercase{\appendicesname.\ { #1}}}}
\newcommand{\bookappendix}[1]
          {\par\setcounter{section}{0}
           \setcounter{equation}{0}
           \setcounter{table}{0}
           \setcounter{figure}{0}
          \addtocounter{appcount}{1}   \renewcommand{\theequation}{\Alph{appcount}.\arabic{equation}}
           \renewcommand{\thetable}{\Alph{appcount}.\arabic{table}}
             \renewcommand{\thefigure}{\Alph{appcount}.\arabic{figure}}
           \setcounter{section}{\arabic{chapter}\Alph{section}}
           \if@openright\cleardoublepage\else\clearpage\fi
           \chapter*{\huge{\bookappendicesname}\newline\newline \Huge{#1}}
           \addcontentsline{toc}{chapter}{\bookappendicesname #1}
          \markright{\MakeUppercase{\bookappendicesname.\ { #1}}} }
\newcounter{example}
\newcounter{property}
\newcommand{\ben}{\begin{equation}}
\newcommand{\een}{\end{equation}}
\newcommand{\bea}{\begin{eqnarray*}}
\newcommand{\eea}{\end{eqnarray*}}
\newcommand{\bean}{\begin{eqnarray}}
\newcommand{\eean}{\end{eqnarray}}
\newcommand*{\Scale}[2][4]{\scalebox{#1}{$#2$}}%
\DeclareMathOperator*{\argmax}{arg\,max}
\begin{document}

\title{EEF: Exponentially Embedded Families with Class-Specific Features for Classification}

\author{Bo~Tang,~\IEEEmembership{Student Member,~IEEE,}~Steven Kay,~\IEEEmembership{Fellow,~IEEE,}~Haibo~He,~\IEEEmembership{Senior Member,~IEEE,}~and~Paul M. Baggenstoss,~\IEEEmembership{Senior Member,~IEEE}% <-this % stops a space
\IEEEcompsocitemizethanks{\IEEEcompsocthanksitem Copyright (c) 2015 IEEE. Personal use of this material is permitted. However, permission to use this material for any other purposes must be obtained from the IEEE by sending a request to pubs-permissions@ieee.org.
\IEEEcompsocthanksitem This work was supported in part by National Science Foundation (NSF) under grant ECCS 1053717 and CCF 1439011, and the Army Research Office under grant W911NF-12-1-0378.
\IEEEcompsocthanksitem Bo Tang, Steven Kay, and Haibo He are with the Department of Electrical, Computer and Biomedical Engineering at the University of Rhode Island, Kingston, RI, USA, 02881. E-mail: \{btang, kay, he\}@ele.uri.edu}
\IEEEcompsocitemizethanks{\IEEEcompsocthanksitem Paul M. Baggenstoss is with Fraunhofer FKIE, Wachtberg, Germany, 53343. E-mail: paul.m.baggenstoss@ieee.org.}
}

\IEEEcompsoctitleabstractindextext{
\begin{abstract}
In this letter, we present a novel exponentially embedded families (EEF) based classification method, in which the probability density function (PDF) on raw data is estimated from the PDF on features. With the PDF construction, we show that class-specific features can be used in the proposed classification method, instead of a common feature subset for all classes as used in conventional approaches. We apply the proposed EEF classifier for text categorization as a case study and derive an optimal Bayesian classification rule with class-specific feature selection based on the Information Gain (IG) score. The promising performance on real-life data sets demonstrates the effectiveness of the proposed approach and indicates its wide potential applications.
\end{abstract}

% Note that keywords are not normally used for peerreview papers.
\begin{keywords}
Exponentially embedded families, class-specific features, feature selection, text categorization, probability density function estimation, naive Bayes.
\end{keywords}}

% make the title area
\maketitle

% To allow for easy dual compilation without having to reenter the
% abstract/keywords data, the \IEEEtitleabstractindextext text will
% not be used in maketitle, but will appear (i.e., to be "transported")
% here as \IEEEdisplaynontitleabstractindextext when the compsoc 
% or transmag modes are not selected <OR> if conference mode is selected 
% - because all conference papers position the abstract like regular
% papers do.
\IEEEdisplaynotcompsoctitleabstractindextext
% \IEEEdisplaynontitleabstractindextext has no effect when using
% compsoc or transmag under a non-conference mode.

\IEEEpeerreviewmaketitle

\section{Introduction}
Classification is one of fundamental problems in the fields of machine learning and signal processing. The commonly used classifier assigns a sample or a signal to the class with maximum posterior probability, which usually requires probability density function (PDF) estimation in an either model-driven or data-driven manner \cite{duda2012pattern}\cite{bishop2006pattern}\cite{bo_enn}. For high-dimensional data sets, it is necessary to perform feature reduction to estimate the PDFs robustly in a low-dimensional feature subspace. However, feature reduction may lose pertinent information for discrimination. For example, data samples from different classes that could be well separated in the raw data space may be overlapped in the feature subspace, causing classification errors. 

The PDF reconstruction approach provides a solution to address this information loss issue in feature reduction by reconstructing the PDF on raw data and making classification in raw data space, which could improve classification performance. 
%Since this approach performs classification in the raw data space, it maintains the distributions of features of the different classes. In other words, each class could have its own feature transformation function, leading to class-specific features. Class-specific features offer many advantages for multi-class classification. For example, class-specific features carry much more discriminative information from the original raw data, because each class can select the most discrminative features against the other classes. 
Several approaches have been developed along this track. Moghaddam et al. \cite{moghaddam1997probabilistic}\cite{moghaddam2000bayesian} use an eigenspace decomposition to approximate the high-dimensional raw data PDF, where the raw data space is divided into two complementary subspaces using Principal Components Analysis (PCA): the principal subspace (distance in feature space) and the orthogonal complement subspace (distance from feature space). 
While the PDF in the low-dimensional principal subspace is estimated using training data, the PDF in the complementary subspace is approximated with the PCA residual error. Then, the estimated PDF in the raw data space is written as the product of these two PDFs. More recently, researchers apply Bayesian partitioning techniques to estimate the distribution in high dimensional data space. For example, Wong and Ma in \cite{wong2010optional} developed the Optional Polya Tree (OPT) to construct a prior distribution, and Lu et al. in \cite{lu2013multivariate} derived a closed form of posterior probability using Bayesian sequential partitioning. 

%To perform classification with class-specific features, a one-vs-all classification scheme \cite{rifkin2004defense} can be used to build hierarchical multiclassifiers \cite{kumar2000hierarchical}\cite{de2011class}. 
PDF Projection Theorem (PPT) \cite{baggenstoss2004class} offers another solution for distribution construction which projects the PDF in the feature subspace back to the raw data space. It can be shown that all PDFs that generate the given feature PDF can be constructed with the PPT by selecting a suitable reference hypothesis.
%While there may be many PDFs in raw data space that are able to produce the PDF in feature subspace exactly, the PPT finds the one that is the closest to the reference PDF. 
The generality of PPT makes it a good one for classification to avoid the ``curse of dimensionality" \cite{baggenstoss2004class}\cite{tang2016tkde}. It also allows class-specific features, that is to say, each class could have its own feature transformation function. Class-specific features offer many advantages for multi-class classification. For example, class-specific features carry much more discriminative information from the original raw data, because each class can select the most discriminative features against the other classes. This characteristic makes the PPT different from many other classifications methods which usually need to incorporate a one-vs-all classification scheme \cite{rifkin2004defense} to build hierarchical multiclassifiers \cite{kumar2000hierarchical}\cite{de2011class} to use class-specific features.
 
The exponentially embedded family (EEF) \cite{kay2005exponentially} is related to PPT. Like PPT, EEF is based on the estimated feature PDF and a specified reference hypothesis. While PPT produces a raw data PDF that reproduces the given feature PDF exactly, EEF is a way of combining one or more PDFs constructed with PPT in a geometric mixture with the reference hypothesis. The raw data PDF constructed using EEF reproduces the moments of a log-likelihood ratio statistic. This statistic can be easily estimated in the feature space and is directly linked to class separability. Thus, while PPT could be preferred for general PDF estimation, produces PDFs that are easily sampled, and offers maximum entropy optimality \cite{paul2015maximum}, EEF could be preferable in classifier design since it directly targets class separability.

%In this letter, we introduce the exponentially embedded family (EEF) which, like the PPT, constructs a PDF on the raw data space with the help of a reference hypothesis. But the requirement that PPT reproduces the given feature PDF exactly might be too restrictive. EEF relaxes this restriction and forms a geometric mixture of two PDFs: the reference hypothesis $c_0$ and the PPT density for that $c_0$.  The optimal value of the mixing weight (i.e. the maximum likelihood estimate (MLE) of the mixing parameter) can be easily found and is based on a sufficient statistic formed from the likelihood ratio in the feature space between the given class and the reference hypothesis. We show how the EEF can attain even higher classification performance than the PPT through this technique. 
%%In this letter, we present a new PDF estimation method based on the exponentially embedded family (EEF), which constructs the PDF on raw data from the PDF on features. Specifically, the constructed PDF on raw data is based on the reference distribution and the sufficient statistic of log-likelihood ratio on features. 
%Using the constructed PDF on raw data, we derive a Bayesian classifier with class-specific features, termed EEF classifier, and apply it for text categorization as a case study. The experiment results on real-life benchmarks show superior classification performance of the proposed EEF classifier and further indicate many potential applications for machine learning and signal processing. 

In this letter, we apply EEF to the class-specific classification problem and show that EEF can attain even higher classification performance than PPT. Using the constructed PDF on raw data, we derive a Bayesian classifier with class-specific features, termed EEF classifier, and apply it for text categorization as a case study. The experimental results on real-life benchmarks show superior classification performance of the proposed EEF classifier and further indicate many potential applications for machine learning and signal processing. 

%\textbf{The motivation of EEF is the following. First, we define a reference hypothesis $c_0$ that is the union of all classes. Therefore, $c_0$ is smooth (since it can be estimated from more training data) and non-committal with respect to each class. Next, we define a statistic of log-likelihood ratio $T(\mathbf{x})=\log {p(\mathbf{z}|c_i)/p(\mathbf{z}|{c_0} )}$, which is a measure of the discriminative power between the given class and the reference hypothesis. EEF then constructs a geometric mixture distribution in the raw data between the PPT density and the reference hypothesis, with the constraint that the theoretical moments of $T(\mathbf{x})$ match the observed values. This has a smoothing effect because the constructed PDF is as close as possible to the  smooth and non-committal $c_0$, yet has the same specified discriminative power as the PPT density with respect to the other classes.  We show how the EEF can attain even higher classification performance than the PPT through this technique.}

\section{EEF Classifier with Class-Specific Features}
\subsection{Background: Bayesian Classifier with Feature Reduction}
Considering a $N$-class classification task in which a data sample $\mathbf{x}$, $\mathbf{x}\in\mathcal{R}^D$, is to be classified into one of $N$ classes: $c_i, i=1,2,\cdots,N$. The optimal Bayesian classifier with minimum probability of error for this task is the \textit{maximum a posteriori} (MAP) rule which assigns class $c^*$ to $\mathbf{x}$ with a maximum posterior probability:
\begin{align}
\label{map_raw}
c^*  = \argmax_{i\in \{1,2,\cdots,N\} } p(c_i | \mathbf{x}) = \argmax_{i\in \{1,2,\cdots,N\} } p( \mathbf{x} | c_i) p(c_i)
\end{align}
where $p( \mathbf{x} | c_i)$ is the likelihood of observing $\mathbf{x}$ in class $c_i$, and $p(c_i)$ is the prior probability of class $c_i$. Usually the class-wise distribution $p( \mathbf{x} | c_i)$ is unknown and needs to be estimated from training data. For high-dimensional data, it is impractical to estimate $p( \mathbf{x} | c_i)$ accurately when the given training data is limited. For this case, one usually reduces the sample  $\mathbf{x}$ via feature transformation: $\mathbf{z} = f(\mathbf{x})$, where $\mathbf{z}\in\mathcal{R}^K$ is called the feature of $\mathbf{x}$ and the dimension of $\mathbf{z}$ is far less than that of $\mathbf{x}$, i.e., $K \ll D$. By doing so, the estimation of $p(\mathbf{z} | c_i)$ in the feature subspace is simplified. Using the MAP rule in the feature subspace, we have:
\begin{align}
\label{map_feature}
c^* = \argmax_{i\in \{1,2,\cdots,N\} } p( \mathbf{z} | c_i) p(c_i)
\end{align} 
%To ensure the above classification rule is also optimal, i.e., Eq. (\ref{map_feature}) is equivalent to Eq. (\ref{map_raw}), one has to carefully select a proper feature transformation, especially critical for various high-dimensional data analysis.
This feature-based Bayesian classifier approach forces one to make the choice between (a) sufficient feature information, but too high dimension, or (b) manageable feature dimension, but insufficient feature information. This means that there is no possiblity that Eq. (\ref{map_feature}) is equivalent to Eq. (\ref{map_raw}).  We seek to avoid this compromise by working in the raw data space and estimating $p(\mathbf{x}|c_i)$ without incurring the dimensionality problem caused by the need for a common feature set.

\subsection{EEF for PDF Construction}
%In PPT, the reconstructed PDF $p(\mathbf{x} | c_i)$ that can produce the feature PDF $p(\mathbf{z}_i | c_i)$ has minimum distance to the reference distribution $p(\mathbf{x}|c_0)$, indicating that it is critical to select an appropriate reference distribution. The inappropriate reference distribution may lead to a poor reconstruction of $p(\mathbf{x} | c_i)$. Furthermore, although PPT incorporates the distribution $p(\mathbf{z}|c_i)$ on features, it does not consider how well the reconstructed PDF $p(\mathbf{x} | c_i)$ fits the original raw data.
%
%To address these issues, we generalize the PDF projection theorem with an embedding parameter using exponentially embedded families (EEF) \cite{kay2005exponentially}. 
In this subsection, we show that the raw data PDF $p(\mathbf{x}|c_i)$ can be constructed from the feature PDF $p(\mathbf{z}|c_i)$ using EEF. First, we define a smoothing reference hypothesis $c_0$ (e.g., the union of all classes is used as $c_0$ in our study case), which is non-committal with respect to the $N$ classes. Next we define a log-likelihood ratio statistic $T(\mathbf{x}) = \log {p(f(\mathbf{x})|c_i)/p(f(\mathbf{x})|{c_0} )} = \log {p(\mathbf{z}|c_i)/p(\mathbf{z}|{c_0} )}$, which is a measure of the discriminative power between the given class and the reference hypothesis. 
%EEF then constructs a geometric mixture distribution in the raw data between the PPT density and the reference hypothesis, with the constraint that the theoretical moments of $T(\mathbf{x})$ match the observed values. This has a smoothing effect because the constructed PDF is as close as possible to the  smooth and non-committal $c_0$, yet has the same specified discriminative power as the PPT density with respect to the other classes.

Mathematically, using EEF \cite{kay2005exponentially}\cite{kay_sseef}, we estimate the PDF $p(\mathbf{x} | c_i)$ for class $c_i$ in raw data space as follows:
\begin{align}
\label{GPPT_EEF} 
p(\mathbf{x} | c_i; \theta) = \exp \left( \theta \ln \frac{p(\mathbf{z} | c_i)}{p(\mathbf{z}|c_0)} - K_0(\theta) + \ln p(\mathbf{x}|c_0)  \right)
\end{align}
where $\theta$ is the embedding parameter, and $ K_0(\theta)$ is the cumulant generating function, which is given by:
\begin{align}
\label{GPPT_EEF_K0}
K_0(\theta) & = \ln \int_{\mathbf{x}} \exp \left( \theta \ln \frac{p(\mathbf{z} | c_i)}{p(\mathbf{z}|c_0)} \right) p(\mathbf{x}|c_0) d \mathbf{x} \nonumber \\
& = \ln E_{p_{0}}\left[ \exp \left( \theta \ln \frac{p(\mathbf{z} | c_i)}{p(\mathbf{z}|c_0)} \right) \right]
\end{align}
where $E_{p_0}[\cdot]$ denotes the expectation with respect to the distribution $p_0 = p(\mathbf{x}|c_0)$. Note that for $\theta=1$, we have $K_0(\theta) = 0$ and $p(\mathbf{x}|c_i)=p(\mathbf{x} | c_0)/ p(\mathbf{z}|c_0)  p(\mathbf{z}|c_i)$, which is the PPT \cite{baggenstoss2004class}.
%The constructed PDF in Eq. (\ref{GPPT_EEF}) has a form of EEF  when the log-likelihood ratio $\ln [{p(\mathbf{z} | c_i)}/{p(\mathbf{z}|c_0)}] = T(x)$ is considered as a measurable statistic of $\mathbf{x}$.

As discussed before, the motivation of PDF construction in Eq. (\ref{GPPT_EEF}) is to effectively smooth the constructed density by minimizing the KL-divergence from $p(\mathbf{x} | c_i; \theta)$ to the smoothed and non-committal reference PDF $p(\mathbf{x}|c_0)$ with the constraints of moment-matching for the statistic $T(\mathbf{x}) = \ln [{p(\mathbf{z} | c_i)}/{p(\mathbf{z}|c_0)}]$, i.e., $E_{\hat{p}}[T(\mathbf{x})] = E_{p}[T(\mathbf{x})]$, where $\hat{p}$ denotes the PDF $p(\mathbf{x} | c_i; \theta)$ in Eq. (\ref{GPPT_EEF}) and $p$ denotes the true PDF $p(\mathbf{x} | c_i)$. The following theorem \cite{kullback1997information} demonstrates our motivation:

%The motivation of PDF construction in Eq. (\ref{GPPT_EEF}) is to minimize the KL-divergence of the constructed PDF $p(\mathbf{x} | c_i; \theta)$ to the reference PDF $p(\mathbf{x}|c_0)$ with the constraints of moment-matching for the statistic $T(\mathbf{x}) = \ln [{p(\mathbf{z} | c_i)}/{p(\mathbf{z}|c_0)}]$, i.e., $E_{\hat{p}}[T(\mathbf{x})] = E_{p}[T(\mathbf{x})]$, where $\hat{p}$ denotes the PDF $p(\mathbf{x} | c_i; \theta)$ in Eq. (\ref{GPPT_EEF}) and $p$ denotes the true PDF $p(\mathbf{x} | c_i)$. The following theorem \cite{kullback1997information} demonstrates our motivation:

\newtheorem{theorem}{\bf Theorem}
\begin{theorem}
\it Let $p_0(\mathbf{x})$ be the reference distribution and $p_1(\mathbf{x})$ be the true distribution to be estimated. Given that $T(\mathbf{x})$ is a measurable statistic such that both $\lambda = \int T(\mathbf{x}) p_1(\mathbf{x})d \mathbf{x}$ and $M(\theta) = \int \exp\left( \theta T(\mathbf{x}) \right) p_0(\mathbf{x})$ exist, the estimate $\hat{p}_1(\mathbf{x})$ with minimum KL-divergence $KL(\hat{p}_1 || p_0)$ is:
\begin{align}
\hat{p}_1(\mathbf{x}; \theta) = \exp \left( \theta T(\mathbf{x}) - \ln M(\theta) + \ln p_0(\mathbf{x}) \right)
\end{align}
\end{theorem}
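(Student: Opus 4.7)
The plan is to recognize this as a constrained optimization problem: minimize the Kullback-Leibler divergence $KL(\hat{p}_1 \Vert p_0) = \int \hat{p}_1(\mathbf{x}) \ln\bigl(\hat{p}_1(\mathbf{x})/p_0(\mathbf{x})\bigr) d\mathbf{x}$ over all densities $\hat{p}_1$ subject to two constraints implicit in the setup: the normalization $\int \hat{p}_1(\mathbf{x})\, d\mathbf{x} = 1$ and the moment-matching condition $\int T(\mathbf{x})\, \hat{p}_1(\mathbf{x})\, d\mathbf{x} = \lambda$ that the surrounding text emphasizes. Without the moment constraint the trivial minimizer would be $\hat{p}_1 = p_0$, so the constraint must be read into the hypothesis via the existence of $\lambda$.

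First I would introduce Lagrange multipliers $\theta$ and $\mu$ for the moment and normalization constraints respectively, and form the Lagrangian
\begin{align*}
\mathcal{L}[\hat{p}_1] = \int \hat{p}_1 \ln\frac{\hat{p}_1}{p_0}\, d\mathbf{x} - \theta\!\left(\int T \hat{p}_1\, d\mathbf{x} - \lambda\right) - \mu\!\left(\int \hat{p}_1\, d\mathbf{x} - 1\right).
\end{align*}
Next I would take the functional (pointwise) derivative with respect to $\hat{p}_1(\mathbf{x})$ and set it to zero, yielding $\ln\bigl(\hat{p}_1(\mathbf{x})/p_0(\mathbf{x})\bigr) + 1 - \theta T(\mathbf{x}) - \mu = 0$, so that $\hat{p}_1(\mathbf{x}) = p_0(\mathbf{x})\exp\bigl(\theta T(\mathbf{x}) + \mu - 1\bigr)$.

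Then I would eliminate $\mu$ by imposing normalization: integrating both sides and using the definition of $M(\theta)$ gives $e^{\mu-1} = 1/M(\theta)$, hence $\mu - 1 = -\ln M(\theta)$. Substituting back produces exactly the claimed form
\begin{align*}
\hat{p}_1(\mathbf{x};\theta) = \exp\bigl(\theta T(\mathbf{x}) - \ln M(\theta) + \ln p_0(\mathbf{x})\bigr),
\end{align*}
with $\theta$ chosen so that $E_{\hat{p}_1}[T] = M'(\theta)/M(\theta) = \lambda$. Finally I would confirm this stationary point is the unique global minimum by appealing to the strict convexity of $q \mapsto KL(q \Vert p_0)$ on the convex feasible set, so any critical point must be the minimizer.

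The main obstacle is mostly expository rather than technical: the statement as written does not explicitly include the moment constraint, so I must clarify that ``minimum KL-divergence'' is understood subject to $E_{\hat{p}_1}[T]=\lambda$ (otherwise the theorem is vacuous, being minimized at $\hat{p}_1=p_0$). A secondary point requiring care is verifying that $\theta$ can be chosen to attain the prescribed $\lambda$, which follows from the strict convexity of $\ln M(\theta)$ (the cumulant generating function) wherever $M$ exists, ensuring $\lambda \mapsto \theta$ is well-defined on the interior of the natural parameter space.
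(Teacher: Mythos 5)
Your proposal is correct, but note that the paper does not actually prove this theorem: it simply cites Kullback and prior work, so you are supplying an argument the paper omits. Your route is the standard variational one --- Lagrange multipliers for the normalization and moment constraints, a pointwise stationarity condition giving the exponential tilt $\hat{p}_1 \propto p_0 e^{\theta T}$, and elimination of the normalization multiplier via $M(\theta)$ --- and you correctly flag the one real subtlety, namely that the theorem is vacuous unless ``minimum KL-divergence'' is read as subject to $E_{\hat{p}_1}[T]=\lambda$, which is exactly how the surrounding text (moment matching $E_{\hat{p}}[T(\mathbf{x})]=E_{p}[T(\mathbf{x})]$) uses it. For comparison, Kullback's own argument, and the cleaner fully rigorous version, avoids functional derivatives entirely: for \emph{any} density $q$ with $\int T q = \lambda$ one computes $\int q \ln(\hat{p}_1/p_0)\,d\mathbf{x} = \theta\lambda - \ln M(\theta) = \int \hat{p}_1 \ln(\hat{p}_1/p_0)\,d\mathbf{x}$, whence
\begin{align*}
KL(q\,\Vert\,p_0) - KL(\hat{p}_1\,\Vert\,p_0) = \int q \ln\frac{q}{\hat{p}_1}\,d\mathbf{x} = KL(q\,\Vert\,\hat{p}_1) \ge 0,
\end{align*}
with equality iff $q=\hat{p}_1$ a.e. This direct verification buys you uniqueness and global optimality in one line without worrying about the regularity of the functional derivative, whereas your Lagrangian derivation has the advantage of \emph{discovering} the exponential form rather than verifying it; appending the convexity remark, as you do, closes the gap between stationarity and global minimality, so your argument stands. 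Your closing caveat about attainability of $\lambda$ (the range of $(\ln M)'(\theta)$ over the interior of the natural parameter set) is also apt and is glossed over in the paper.
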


\begin{proof}
The proof of this theorem is given by Kullback \cite{kullback1997information}, and its applicability has also been demonstrated in our previous work \cite{kay2005exponentially}\cite{kay2015pdf}\cite{tang2015parametric}. 
%Note that the measurable statistic is: $T(x) = \ln [{p(\mathbf{z} | c_i)}/{p(\mathbf{z}|c_0)}]$ in our constructed PDF given by Eq. (\ref{GPPT_EEF}).
\end{proof}

%The idea of our generalized PDF projection theorem is to introduce embedding parameters that are able to maximize the likelihood of observations, thus reducing the estimation degradation due to the inappropriate use of a reference distribution. 
%It can be shown that when $\theta = 1$, our constructed PDF $p(\mathbf{x} | c_i; \theta)$ from Eq. (\ref{GPPT_EEF}) is equivalent to the one constructed by Baggenstoss's PPT \cite{baggenstoss2003pdf}, and when $\theta = 0$, our constructed PDF $p(\mathbf{x} | c_i; \theta)$ is the reference distribution $p(\mathbf{x} | c_0)$. 

%In EEF, it is better to choose the reference distribution that is close to the true one in the same PDF family. It has also been shown that the reference hypothesis with the union of all classes is good one, which can be considered as the geometric center of PDFs of all classes \cite{tang2015parametric}.
In EEF, it is better to choose the reference distribution that is smooth and non-committal with respect to the $N$ classes. The reference hypothesis consisting of the union of all classes is good one, and can be
considered the geometric center of PDFs of all classes \cite{tang2015parametric}. The embedding parameter $\theta$ specifies the constructed PDF that has minimum KL-divergence to the reference distribution with the constraint of moment-matching. For each class, the optimal embedding parameter $\theta_i^*$ can be estimated using the MLE criterion, which is given by:
\begin{align}
\label{GPPT_EEF_MLE}
\theta_i^* = \argmax_{\theta \in \Theta} \theta \ln \frac{p(\mathbf{z} | c_i)}{p(\mathbf{z}|c_0)} - K_0(\theta) \quad i = 1,2,\cdots, N
\end{align}
%where $\Theta$ is the parameter space in which $K(\theta_i)$ in Eq. (\ref{GPPT_EEF_K0}) exists. 
Since the cumulant generating function $K_0(\theta)$ is strictly convex and differentiable, the target function in Eq. (\ref{GPPT_EEF_MLE}) is concave and the optimal embedding parameter $\theta_i^*$ can be easily found. 

%We note here that, although a common reference distribution is used in the PDF construction given by Eq. (\ref{GPPT_EEF}), one can further apply a different reference distribution to the PDF construction of each class, which is not explored in this letter. 

\subsection{EEF for Classification with Class-Specific Features}
The PDF construction on raw data from the PDF on features allows class-specific features for classification. Let $f_i(\mathbf{x})$ be the feature transformation for class $i$, and thus we have class-specific features $\mathbf{z}_i = f_i(\mathbf{x})$ for $i=1,2,\cdots,N$. Using Eq. (\ref{GPPT_EEF}), for each class, we can always construct the PDF $p(\mathbf{x}|c_i; \theta_i)$ in raw data space from the PDF in class-specific feature space $p(\mathbf{z}_i | c_i)$.
% as follows:
%\begin{align}
%\label{GPPT_EEF_Class_Specific}
%p(\mathbf{x}|c_i; \theta_i) =  \exp \left( \theta_i \ln \frac{p(\mathbf{z}_i | c_i)}{p(\mathbf{z}_i|c_0)} - K_0(\theta_i) + \ln p(\mathbf{x}|c_0)  \right)
%\end{align}
%With the constructed PDF $p(\mathbf{x} | c_i; \theta_i)$ in Eq. (\ref{GPPT_EEF_Class_Specific}), we can apply the MAP rule to make the classification decision in the raw data space, which is given by:
Applying the MAP rule, we make classification decisions as follows:
\begin{align}
\label{map_gppt}
c^* = \argmax_{i\in \{ 1,2,\cdots,N \} } \theta_i \ln \frac{p(\mathbf{z}_i | c_i)}{p(\mathbf{z}_i|c_0)} - K_0(\theta_i) + \ln p(c_i)
\end{align}
%We note here that by using a common reference distribution in the PDF construction given by Eq. (\ref{GPPT_EEF_Class_Specific}), for which the feature PDF can be estimated, the classifier given by Eq. (\ref{map_gppt}) can be constructed without actually measuring the raw data $\mathbf{x}$. Nevertheless, the classifier is based on an implied raw-data PDF. One could apply a different reference distribution to the PDF construction of each class, which would require measuring $\mathbf{x}$, but is not explored in this letter.
We note here that by using a common reference distribution in the PDF construction, the classifier given by Eq. (\ref{map_gppt}) can be constructed without actually measuring the raw data $\mathbf{x}$. Nevertheless, Eq. (\ref{map_gppt}) is based on an implied raw data PDF. One could apply a different reference distribution to the PDF construction of each class, which would require measuring $\mathbf{x}$, but this is not explored in this letter.

%\section{Study Case: GPPT for BayesianFace}

\section{Study Case: EEF Classifier for Text Categorization}
In this section, we apply the proposed EEF classifier for text categorization in which the multinomial naive Bayes (MNB) is used as classifier. In Fig. \ref{TC_Flow_Chart}, we illustrate the difference between our EEF classifier and the conventional classifier for text categorization. Using the ``bag-of-words", a document is transformed to a real-valued vector through a dictionary that consists of all distinct words or phrases for a data set. In the real-value vector, the element denotes the occurrence of words in the document. Because of its high dimensionality, it is necessary to perform feature reduction to reduce the computational burden for training a classifier.
% although previous studies have shown that using more features may lead to better performance. 
Feature selection is a commonly used method for feature reduction in text categorization. In conventional approaches, a feature importance measurement, such as information gain (IG) \cite{yang1997comparative} or maximum discrimination (MD) \cite{bo_md}, is first employed to calculate feature importance for each individual class, and then a global function, such as sum or weighted average, is applied to rank features to select a common feature subset for all classes. In contrast, we rank features for each class and apply the class-specific features for classification. 

\begin{figure}[!ht]
\centering
\includegraphics[width = 6cm]{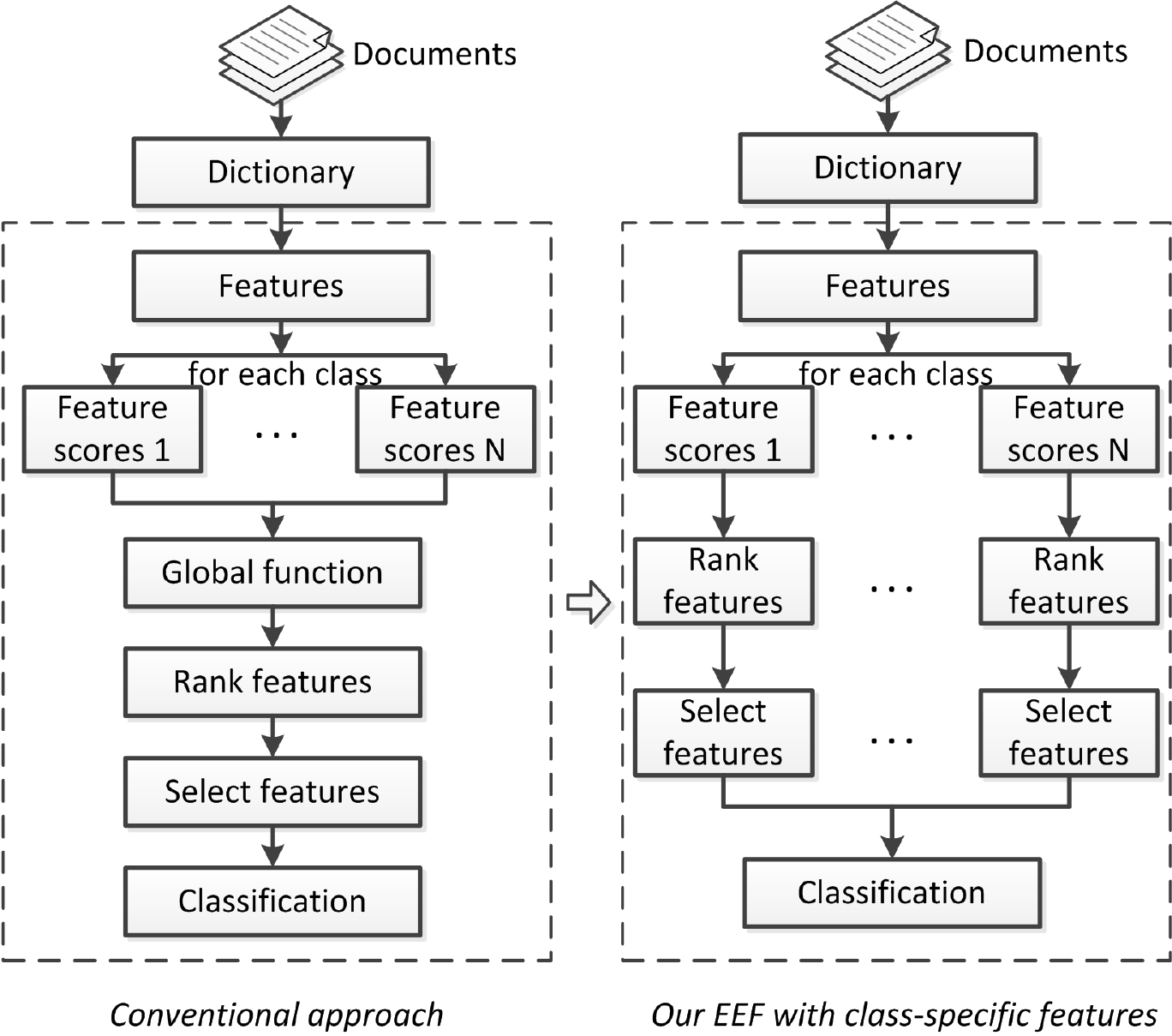}
\caption{The flow chart of our EEF classifier with class-specific features for text categorization (right), compared with the conventional approach (left).} 
\label{TC_Flow_Chart}
\end{figure}

\subsection{PDF Construction}
In MNB, the features (word occurrences) of each class satisfy a specific multinomial distribution. Let $\mathbf{x} \in \mathcal{R}^D$ be the raw feature transformed from the document, and then for each class $c_i$, $i=1,2,\cdots,N$, we have a multinomial distribution $p(\mathbf{x} | c_i)$ with $D$ parameters (cell probabilities): $[p_{i,1}, \cdots, p_{i,D}]$. The likelihood of observing a document $\mathbf{x}$ in class $c_i$ conditioned on its document length $l$ \footnote{The likelihood of observing a document $p(\mathbf{x} | c_i, l)$ is conditioned on the document length $l$. This is different than the conventional MNB classifier for text categorization in which the document length is usually assumed to be constant, i.e., $p(\mathbf{x} | c_i, l) = p(\mathbf{x} | c_i)$.}
is given by:
\begin{align}
\label{md}
p(\mathbf{x} | c_i, l) = \frac{l!}{x_1 ! x_2 ! \cdots x_D!} \prod_{k=1}^{D-1} p_{i,k}^{x_k} p_{i,D}^{x_D}
\end{align}
where $\sum_{k=1}^D p_{i,k} = 1$ and $\sum_{k=1}^D x_k = l$. 

Suppose that the feature selection will select $K$ out of $D$ features. Denote $\mathbf{z}_i$ as the feature vector in class $c_i$ and $\mathbf{I}_i = [n^i_1, \cdots, n^i_K]$ as the corresponding feature indexes in $\mathbf{x}$ such that $z_{ik} = x_{n^i_k}$. Note that the marginal distribution $p(\mathbf{z}_i | c_i)$ still satisfies a multinomial distribution, but with $K+1$ elements. The $(K+1)$-st feature is the combination of all other features in $\mathbf{x}$ except for the $K$ selected features, and the multinomial distribution $p(\mathbf{z}_i | c_i)$ has $K+1$ cells: $[p^{'}_{i,1}, \cdots, p^{'}_{i,K}, p^{'}_{i,K+1}]$ where $p^{'}_{i,k} = p_{i,n^i_k}$, $k=1,2,\cdots, K$, and $p^{'}_{i,K+1} = 1 - \sum_{k=1}^K p^{'}_{i,k}$. 

We denote class $c_0$ as the reference class which consists of all given training data so that the reference distribution $p(\mathbf{x} | c_0)$ still satisfies a multinomial distribution with $D$ parameters: $[p_{0,1}, \cdots, p_{0,D}]$, each of which can be written as:
\begin{align}
\label{mnb_reference_class}
p_{0, k} = \sum_{i=1}^N p_{i,k} p(c_i) \qquad k=1,2,\cdots, D
\end{align}

Using the general construction form in Eq. (\ref{GPPT_EEF}), we construct the PDF $p(\mathbf{x}|c_i)$ for class $c_i$, $i=1,2,\cdots,N$ as follows:
\begin{align}
\label{GPPT_EEF_MNB}
p(\mathbf{x}|c_i, l; \theta_i) 
%& = \exp \left[ \theta_i \ln \frac{p(\mathbf{z}_i | c_i )}{p(\mathbf{z}_i | c_0)} - K_0 (\theta_i, l) + \ln p(\mathbf{x} | c_0) \right ] \nonumber \\
%& 
= \exp \left[\theta_i \sum_{k=1}^{K} z_{ik} \beta_{ik} - K_1(\theta_i, l) + \ln p(\mathbf{x} | c_0) \right ]
\end{align}
where 
\begin{align}
\label{GPPT_EEF_MNB_K1}
K_1(\theta_i, l) = l \ln \left( \sum_{k=1}^K p^{'}_{0,k} \exp (\theta_i \beta_{ik}) + \left(1 - \sum_{k=1}^K p^{'}_{0,k}\right) \right)
\end{align}
and 
\begin{align}
\label{GPPT_EEF_MNB_Beta}
\beta_{ik} = \ln \frac{p^{'}_{i,k}}{p^{'}_{0,k}} - \ln \frac{p^{'}_{i,K+1}}{p^{'}_{0, K+1}}
\end{align}

%\newtheorem{lemma}{\bf Lemma}
%\begin{lemma}
%If $p_0(\mathbf{x})$ and $p_1(\mathbf{x})$ are both multinomial distributions, and $\mathbf{z}$ is a subset of $\mathbf{x}$, with given expected value $\lambda = \int \ln \frac{p_1(\mathbf{z})}{p_0(\mathbf{z})} p_1(\mathbf{x}) d \mathbf{x}$, the estimate $\hat{p}(\mathbf{x})$ with minimum KL-divergence $KL(p_1(\mathbf{x}) || p_0(\mathbf{x}))$ in GPPT is the distribution:
%\begin{align}
%\hat{p}(\mathbf{x}; \theta) & = \exp \left[ \theta \ln \frac{p_1(\mathbf{z})}{p_0(\mathbf{z})} - K_0 (\theta) + \ln p_0(\mathbf{x}) \right ] \nonumber \\
%& = \exp \left[\theta \sum_{k=1}^{K} z_{k} \beta_{k} - K_0(\theta) + \ln p_0(\mathbf{x}) \right ]
%\end{align}
%where 
%\begin{align}
%\label{beta}
%\beta_{ik} = \ln \frac{p^{'}_{i,k}}{p^{'}_{0,k}} - \ln \frac{p^{'}_{i,K+1}}{p^{'}_{0, K+1}}
%\end{align}
%and 
%\begin{align}
%K_0(\theta) = l \ln \left( \sum_{k=1}^K p^{'}_{0,k} \exp (\theta \beta_{k}) + \left(1 - \sum_{k=1}^K p^{'}_{0,k}\right) \right)
%\end{align}
%\end{lemma}

Note that we obtain a closed form solution of the PDF construction in the original high-dimensional space of $\mathbf{x}$ as shown in Eq. (\ref{GPPT_EEF_MNB}) to Eq. (\ref{GPPT_EEF_MNB_Beta}). The detailed derivation is provided in our Supplemental Material. 

Given a $N$-class training data set $\mathcal{X} = \mathcal{X}_1 \cup \mathcal{X}_2 \cup \cdots \cup \mathcal{X}_N$, each class consists of $M_i$ documents $\mathcal{X}_i = \{\mathbf{x}_1, \mathbf{x}_2, \cdots, \mathbf{x}_{M_i}\}$, and each document $\mathbf{x}_m$ has a length of $l_m = \sum_{k=1}^D x_{mk}$, where $x_{mk}$ is the $k$-th element in $\mathbf{x}_m$. We use the MLE to estimate the optimal embedding parameter, which is given by:
\begin{align}
\theta^{*}_i & = \argmax_{\theta_i \in \Theta} \theta_i \sum_{k=1}^{K} \bar{z}_{ik} \beta_{ik} - K_1(\theta_i, \bar{l})
\end{align}
where $\bar{z}_{ik}$ and $\bar{l}$ are the average of word occurrences for the $k$-th selected feature and the average of the document length over the training set $\mathcal{X}_i$ of class $c_i$, respectively. Although it is difficult to find an analytic solution of $\theta^{*}_i$, it can be easily found using convex optimization techniques since the objective function is a concave function with respect to $\theta_i$.

\section{Experimental Results and Analysis}
%\subsection{Data sets}
%In the experiments, we evaluate our GPPT based methods for text categorization on two data sets: \textsc{20 Newsgroups} and \textsc{Reuters}. 
%
%\subsection{Experimental Results}
We use two real-life data sets: \textsc{Reuters-10} and \textsc{Reuters-20}, to evaluate the performance of our proposed approach for text categorization. Both \textsc{Reuters-10} and \textsc{Reuters-20} data sets are the subsets of ModApte version of \textsc{Reuters} collection which consists of 8,293 documents with 65 classes (topics). More specifically, the data set of \textsc{Reuters-10} and \textsc{Reuters-20} consists of documents from the first 10 and 20 classes, respectively.

In these two data sets, we have an original feature size of $18,933$. To reduce the feature size, we apply the IG metric \cite{yang1997comparative} to evaluate the feature importance. For each class $c_i$, the score of the $k$-th feature is calculated as follows:
\begin{align}
\label{ig}
\Scale[1]{ IG(t_k, c_i) = p(t_k, c_i) \log \frac{p(t_k, c_i)}{p(t_k) p(c_i)} + p(\bar{t}_k, c_i) \log \frac{p(\bar{t}_k, c_i)}{p(\bar{t}_k) p(c_i)}}
\end{align}
where $t_k$ indicates the $k$-th term appears in the document, and $\bar{t}_k$ indicates it does not. It is shown that $IG(t_k, c_i)$ is a class-specific feature score. In conventional approaches, a global function, e.g., sum or average, is used to calculate class-independent feature scores for feature ranking, as shown in Fig. \ref{TC_Flow_Chart}. However, the class-specific feature based classifiers rank the feature of each class with the score $IG(t_k, c_i)$ in Eq. (\ref{ig}), and use the class-specific features for classification.

We compare our EEF class-specific MNB classifier with three other state-of-the-art classifiers: MNB classifier \cite{lewis1998naive}, support vector machine (SVM) \cite{cortes1995support}\cite{joachims1998text}, and PPT class-specific MNB classifier \cite{baggenstoss2004class}. While the first two are commonly used in text categorization with class-independent features, the last one and our classifier use class-specific features for classification. In PPT class-specific MNB classifier, we use the same reference hypothesis given by Eq. (\ref{mnb_reference_class}) and class-specific features given by Eq. (\ref{ig}) as used in EEF, and make the classification decision with the following rule:
\begin{align}
c^* = \argmax_{i=\{1,2,\cdots, N\}} \sum_{k=1}^{K+1} z_{ik} \ln \frac{p^{'}_{i,k}}{p^{'}_{0,k}} + \ln p(c_i)
\end{align}

We report the classification results on the data sets of \textsc{Reuters-10} and \textsc{Reuters-20} in Fig. \ref{reuters_ig_10} and Fig. \ref{reuters_ig_20}, respectively, where the feature size ranges from 100 to 2000. It can be shown that our EEF class-specific MNB classifier outperforms other the three methods. For the \textsc{Reuters-10} data set, the two class-specific feature based MNB classifiers greatly improve the accuracy when the feature size is small. When the feature size increases, our EEF class-specific MNB shows promising performance improvement with a large margin compared to the others. For the \textsc{Reuters-20} data set, it is seen that our EEF class-specific MNB consistently performs better than the others. 

\begin{figure}[!ht]
\centering
\includegraphics[width = 6cm]{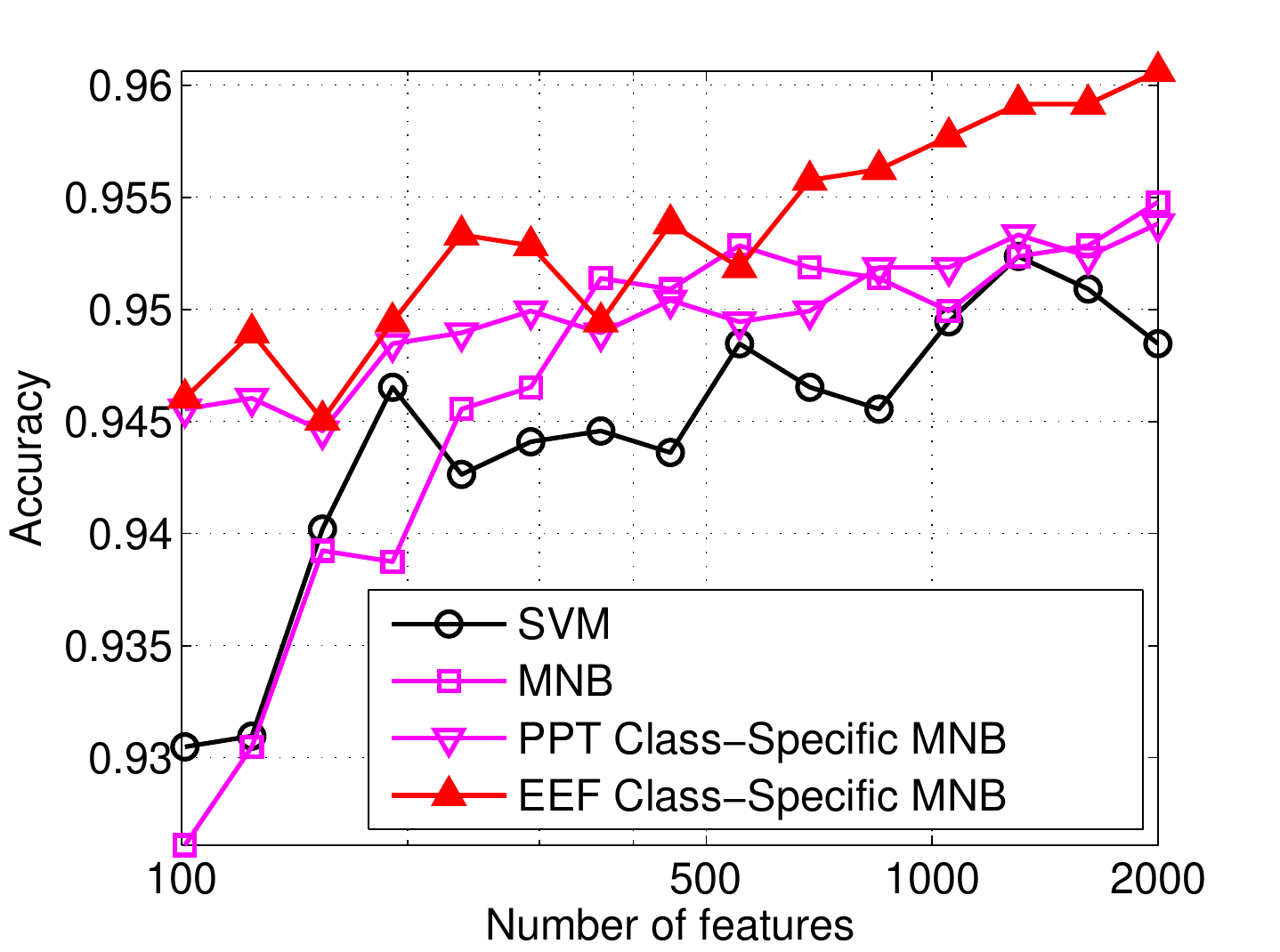}
\caption{Classification results on \textsc{Reuters-10}.} 
\label{reuters_ig_10}
\end{figure}

\begin{figure}[!ht]
\centering
\includegraphics[width = 6cm]{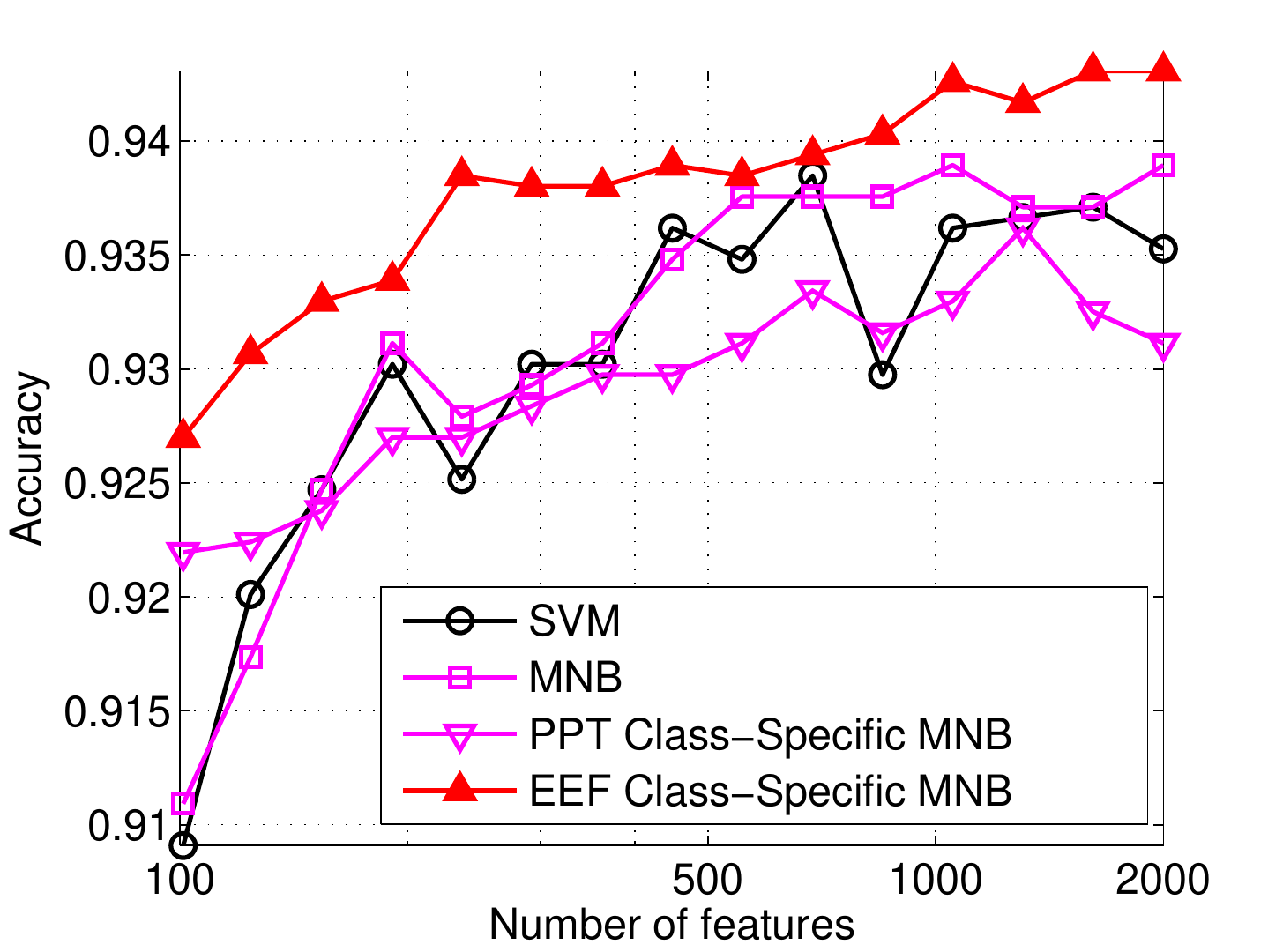}
\caption{Classification results on \textsc{Reuters-20}.} 
\label{reuters_ig_20}
\end{figure}

\section{Conclusion and Future Work}
In this letter, we introduced a new PDF construction method based on EEF to convert the feature PDF to the raw data PDF. With the constructed PDF on raw data, a Bayesian classifier with class-specific features is derived. As a case study, we applied the proposed EEF classifier for text categorization. The superior performance demonstrates the effectiveness of our proposed approach and indicates its wide potential application to machine learning and signal processing. In our future work, we will continue to explore its potential for various practical problems which might require different and complex reference distributions. Particularly, we are interested in applying sampling-based approaches to address the issue that the constructed distribution has no closed form for a complex reference distribution.

\newpage

\bibliographystyle{IEEEtran}
% argument is your BibTeX string definitions and bibliography database(s)
%\bibliography{ref}
% Generated by IEEEtran.bst, version: 1.13 (2008/09/30)
 \newcommand{\noop}[1]{}

\end{document}